\newcommand{\<}{\langle}
\renewcommand{\>}{\rangle}
\newcommand{\bs}{\boldsymbol}
\newcommand{\bm}{\mathbf}
\newcommand{\cD}{\mathcal{D}}
\newcommand{\cN}{\mathcal{N}}
\newcommand{\cP}{\mathcal{P}}
\newcommand{\bE}{\mathbb{E}}
\newcommand{\bI}{\mathbf{I}}
\newcommand{\one}{\mathbf{1}}
\newcommand{\mCov}{\mathrm{Cov}}
\newcommand{\ivar}{{IVAR}}
\newcommand{\icov}{{ICOV}}
\newtheorem{assumption}{Assumption}
\newtheorem{proposition}{Proposition}
\newtheorem{example}{Example}
\newtheorem{remark}{Remark}
\title{Probabilistic Inference for Learning from Untrusted Sources}
\author{
	Duc Thien Nguyen \and
	Shiau Hong Lim\and
	Laura Wynter\and
	Desmond Cai
	\thanks{The authors are with IBM Research, Singapore. Emails:
		\{Duc.Thien.Nguyen@, shonglim@sg., lwynter@sg, desmond.cai1@.\}ibm.com}
}
\begin{document}

\maketitle

\begin{abstract}

Federated learning  brings  potential   benefits of faster learning,  better solutions, and a greater propensity to transfer when heterogeneous data from different parties increases  diversity. However, because federated learning  tasks  tend to be large and complex, and training times non-negligible, it is important for the aggregation algorithm to be robust  to non-IID data and  corrupted parties. This robustness relies on the ability to identify, and appropriately weight, incompatible parties. Recent work assumes that a \textit{reference dataset} is available through which to perform the identification. We consider  settings where no such reference dataset is available; rather, the quality and suitability of the parties needs to be \textit{inferred}. We do so by bringing ideas from
crowdsourced predictions and
collaborative filtering, where one must infer an unknown ground truth
given proposals from participants with unknown quality.
We propose novel federated learning aggregation algorithms based on Bayesian inference
that adapt to the quality of the parties. Empirically,
we show that the
 algorithms outperform standard and robust aggregation
 in federated learning on both synthetic and
real data.
\end{abstract}

\section{Introduction}

For deep neural networks to address more complex tasks in the future it is likely that the participation of multiple users, and hence multiple sources of data, will need  become more widespread. This practice has been widely used  in object recognition \cite{crowdfaces, distribimage, edge, distrib2, distrib3}, but less so in domains such as finance, medicine, prediction markets, internet of things, etc.  Federated learning, as defined by \cite{FL0} is an answer to the problem of  training   complex, heterogeneous tasks. It involves  
 distributing model training  across a  number of parties in a centralized manner while taking into account  communication requirements, over potentially remote or mobile devices, privacy concerns  requiring  that data remains at the remote location, and the lack of balanced or IID data across parties.
 
One challenge in federated learning, as noted by \citet{FL3}, is   the quality and data distribution of the sources being used for the training tasks. A related challenge is the 
potential for random failures or  adversarial parties to disrupt the federated training.
For these reasons, \textit{robust} federated learning has  seen a flurry of activity \citet{FL5, FL6, FL7, FL9, FL10}.
Some, like \citet{byz1, FL6,   zeno} focus on the adversarial setting, and others, like \citet{untrusted, FL10}, focus on the  general setting of distributed learning under different source distributions. 
In both cases, this requires identifying the weight with which to include each party in the aggregation. 

 \citet{untrusted} proposed to give  the aggregator a \textit{reference dataset} with which to measure the quality of each party update.
Like \citet{untrusted}, we explore the question of efficient federated learning  with unequal and possibly untrusted parties. However, the assumption of access to a reference dataset is, for many real-world problems, problematic. Consider a federation of medical diagnosis facilities, each with its own patient population. Not only would it violate privacy concerns  to generate a reference dataset but it  would not in fact be feasible. The same problem arises in virtually any real-world domain for which federated learning offers an appealing solution.

We propose instead to  adapt inference methods from collaborative filtering \cite{CF} to the problem of heterogeneous federated learning aggregation. Using a Gaussian model, we model  each party's estimate  as a noisy observation of an unknown ground truth and define new  probabilistic inference algorithms to iteratively estimate the ground truth.  We show that   the estimated  ground truth   is robust to  faulty and poor quality data. 
Specifically, the contributions of this work are  as follows:
\begin{itemize}

\item We provide a maximum likelihood estimator of the uncertainty level of each party in a federated learning training task. The estimator gives rise to an appropriate weighting for each party in each aggregation.  When each party's data sample is independent,  the estimator reduces to the standard averaging scheme of \cite{McMahanMRHA17}; in the more general case of overlapping samples, it offers a new maximum likelihood estimator.
\item We define two new algorithms for federated learning that make use of the MLE: an inverse variance weighting  and an inverse covariance weighting scheme.
\item As the maximum likelihood estimator can overfit when the  available data is scarce and tends to be  computationally expensive for the inverse covariance scheme, we define a new Variational Bayesian (VB) approach to approximate the posterior distributions of the ground truth under both  independent and latent noise models.

\end{itemize}

Both the MLE and VB methods are tested  on synthetic and real datasets; the tests show the superiority of   aggregation  with  probabilistic inference over standard baselines including the mean  and the more robust median-based approaches: geometric median and  coordinate-wise median.


\begin{figure*}[htb]
	\centering
	\begin{subfigure}{0.25\textwidth}
		\includegraphics[width=\textwidth]{./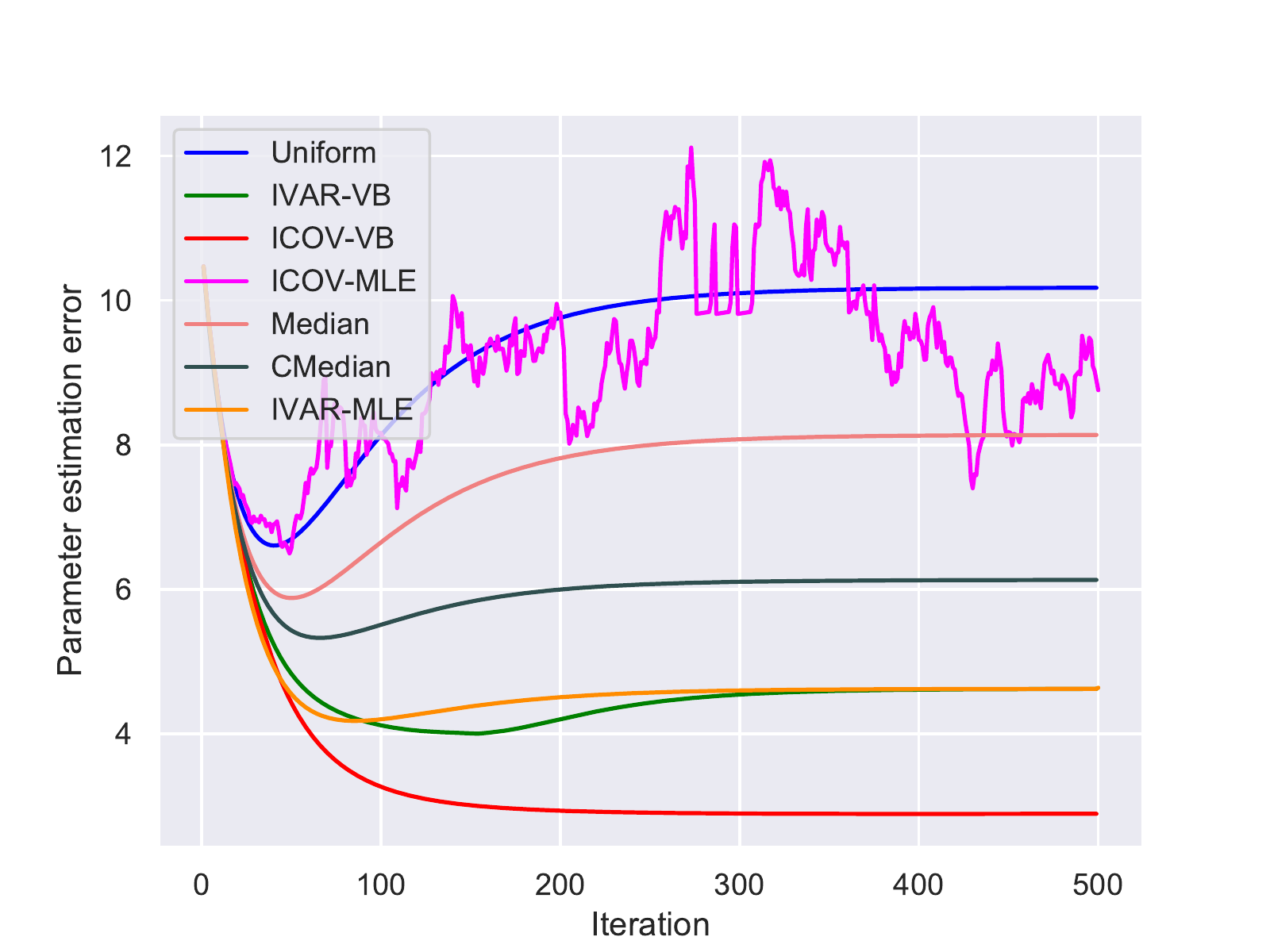} \caption{Full participation. Full batch   (300  samples)}\label{fig:fullP_fullB}
	\end{subfigure} \hskip 0.2cm
	\begin{subfigure}{0.25\textwidth}
		\includegraphics[width=\textwidth]{./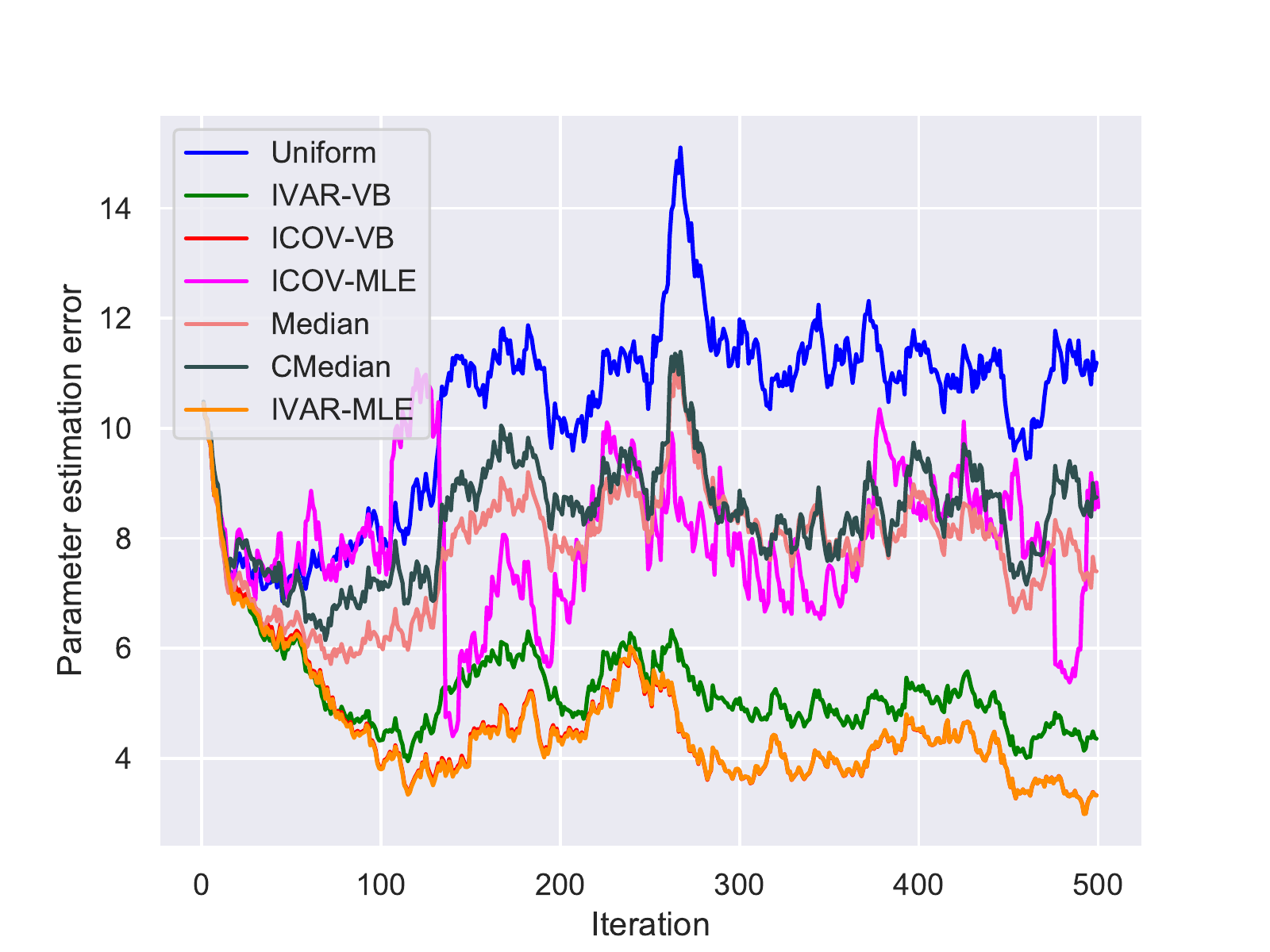} \caption{Full participation. Mini batch  (32 samples)}\label{fig:fullP_mB}
	\end{subfigure} \hskip 0.2cm
	\begin{subfigure}{0.25\textwidth}
		\includegraphics[width=\textwidth]{./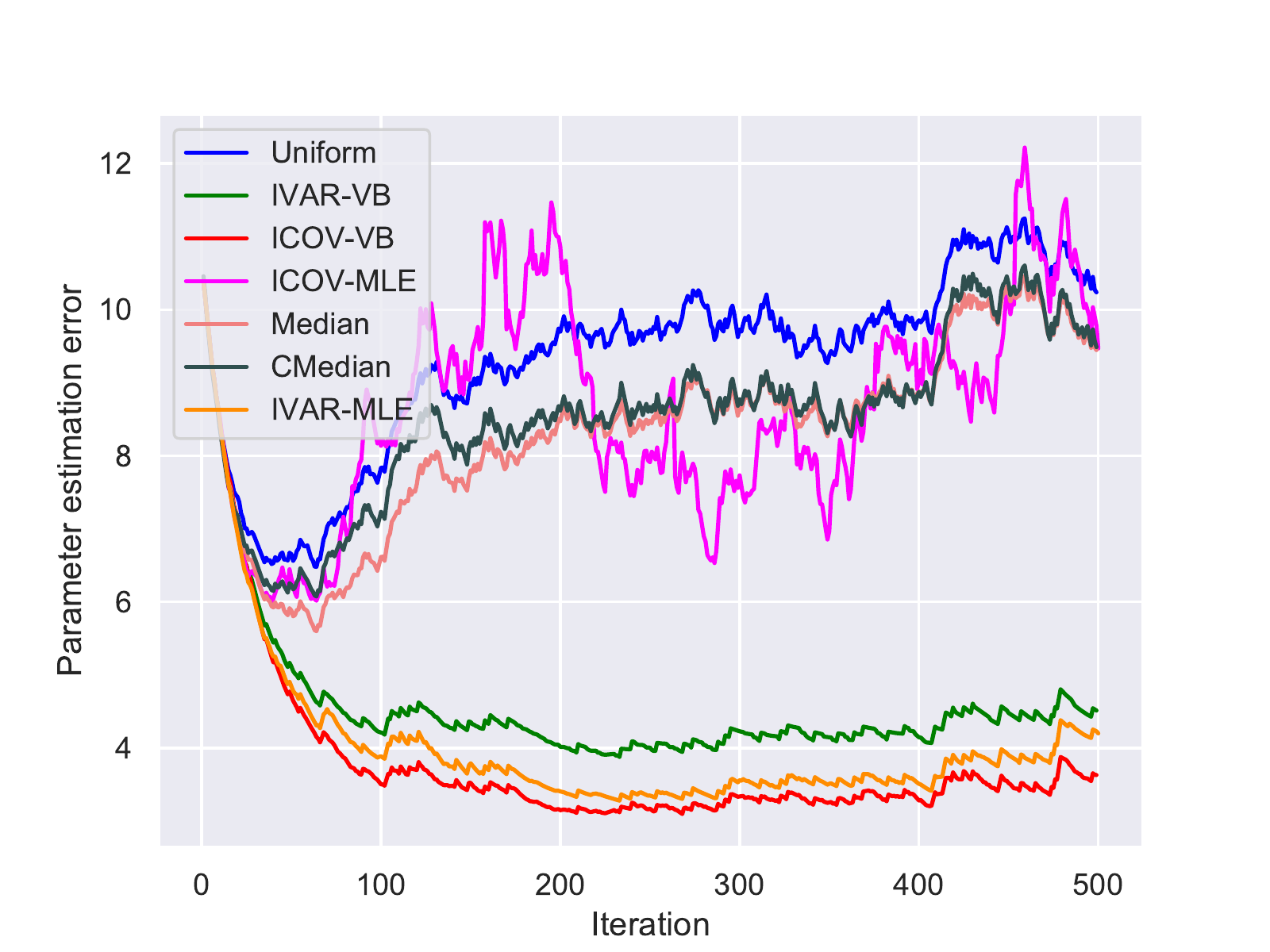} \caption{Partial:  3 random parties per round. Full batch (300  samples)}\label{fig:P_fullB}
	\end{subfigure} 
	\caption{Linear regression. ICOV and IVAR outperform  other methods when there are adversaries.}
	\label{fig:distributed_SGD_linear}
\end{figure*}


\begin{figure*}
	\centering
	\begin{subfigure}{0.25\textwidth}
		\includegraphics[width=\textwidth]{./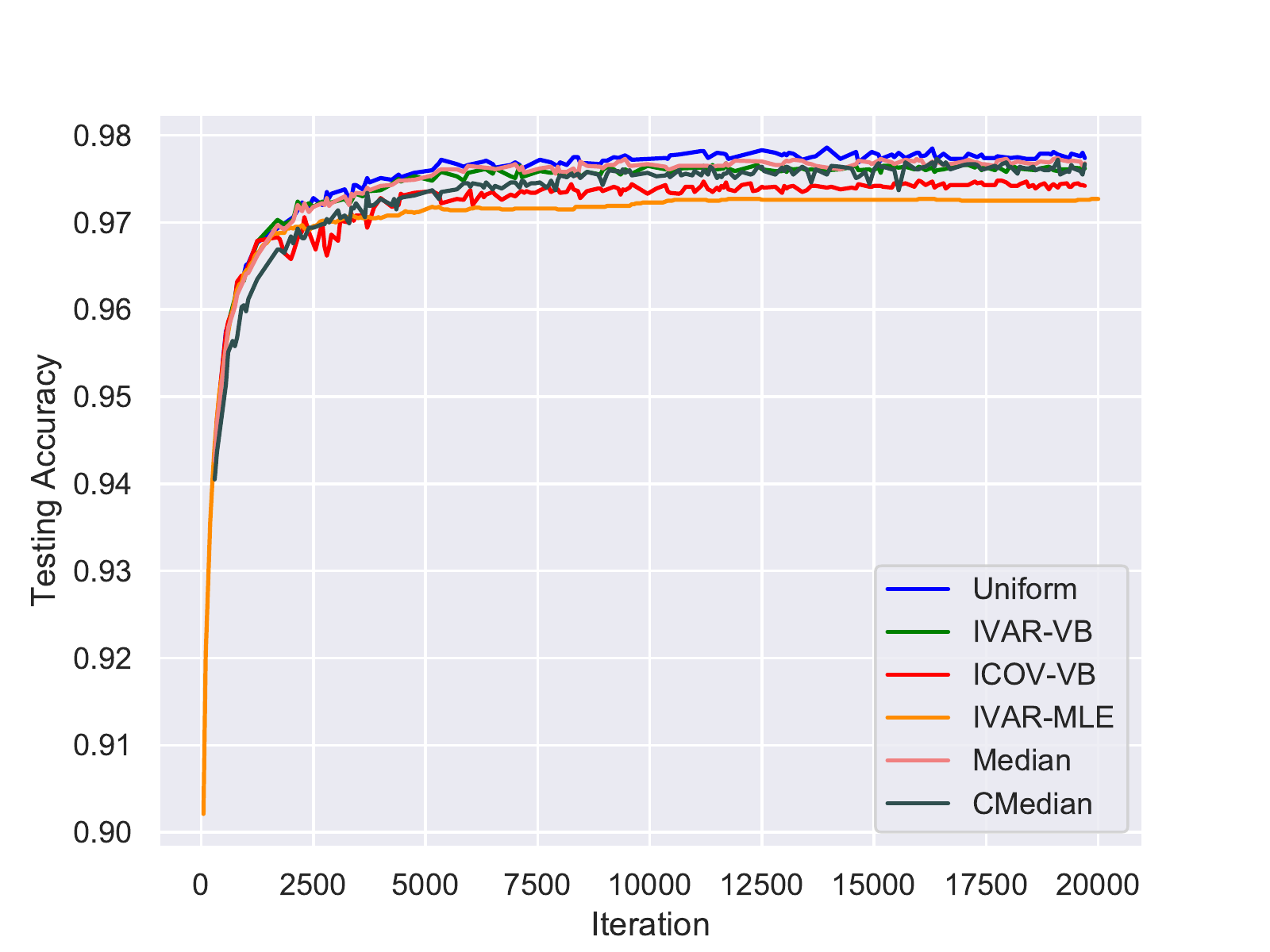} \caption{ 5 genuine parties, 0 adversaries }
	\end{subfigure} \hskip 0.2cm
	\begin{subfigure}{0.25\textwidth}
		\includegraphics[width=\textwidth]{./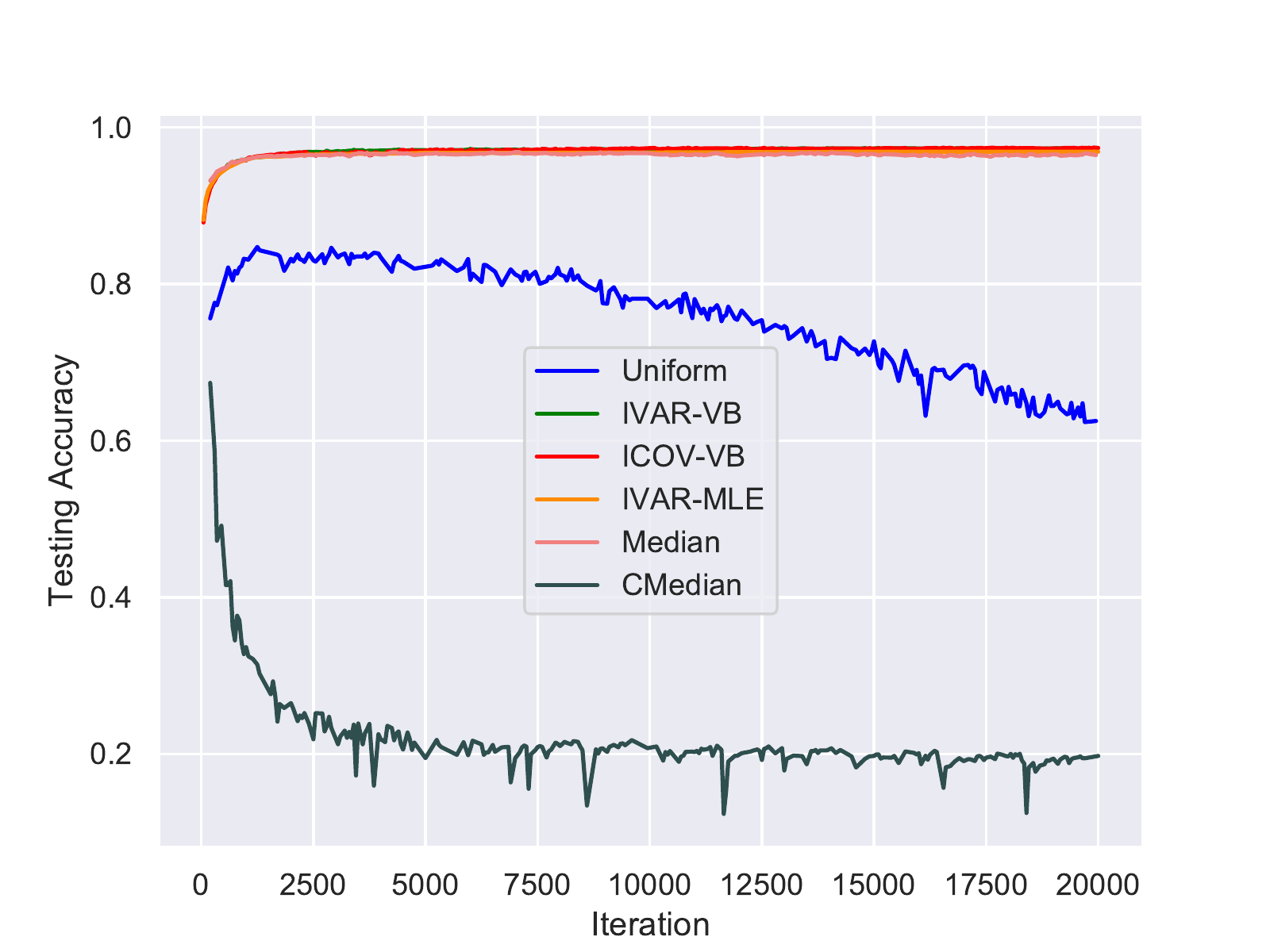} \caption{  5 genuine parties, 5 adversaries}
	\end{subfigure}  \hskip 0.2cm
	\begin{subfigure}{0.25\textwidth}
		\includegraphics[width=\textwidth]{./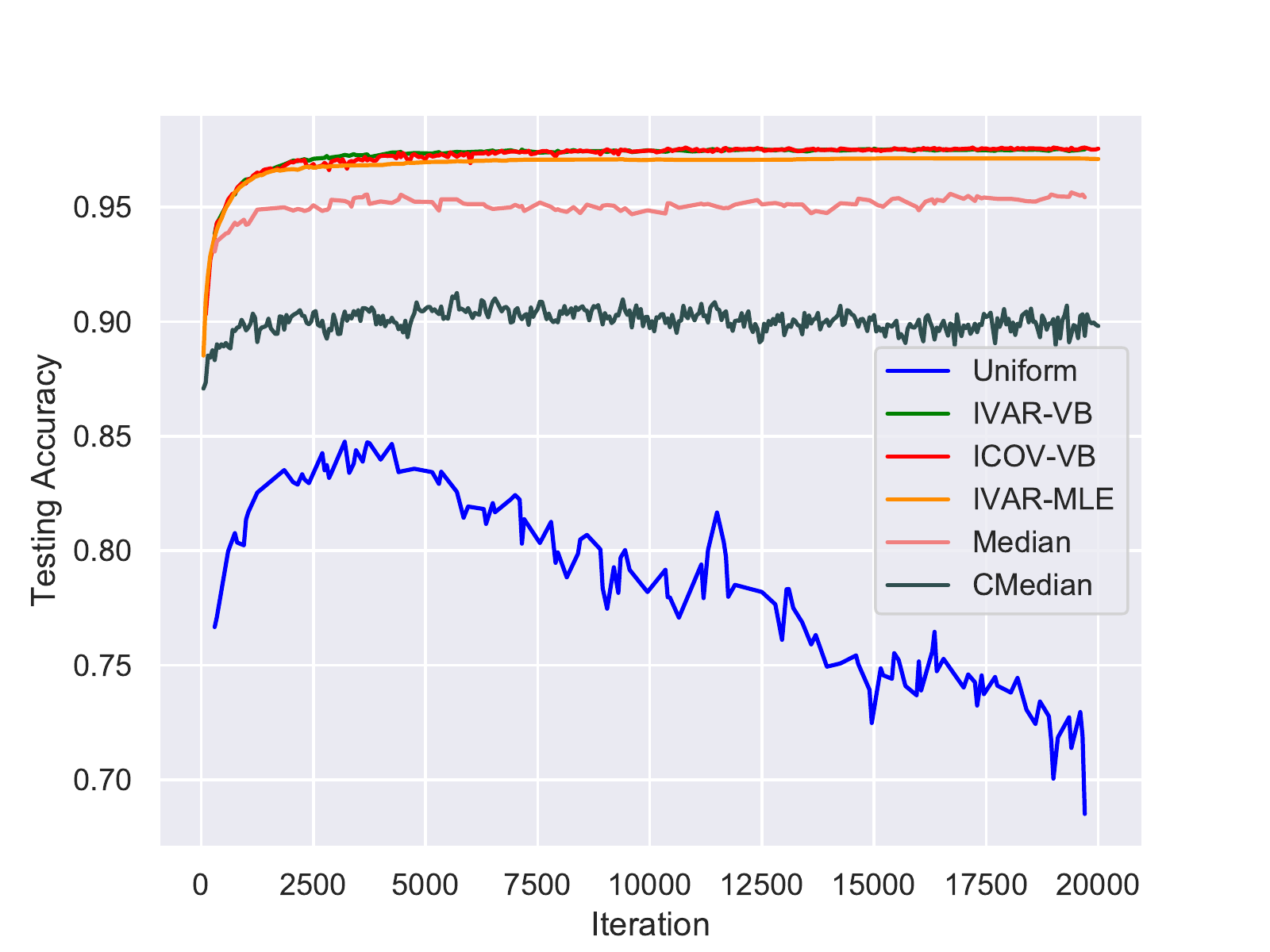} \caption{ 5 genuine parties, 10 adversar.}
	\end{subfigure} 
	\caption{Adversarial MNIST testing performance. ICOV and IVAR outperform  other methods with adversaries.}
	\label{fig:distributed_SGD}
\end{figure*}


\begin{figure*}
	\centering
	\begin{subfigure}{0.25\textwidth}
		\includegraphics[width=\textwidth]{./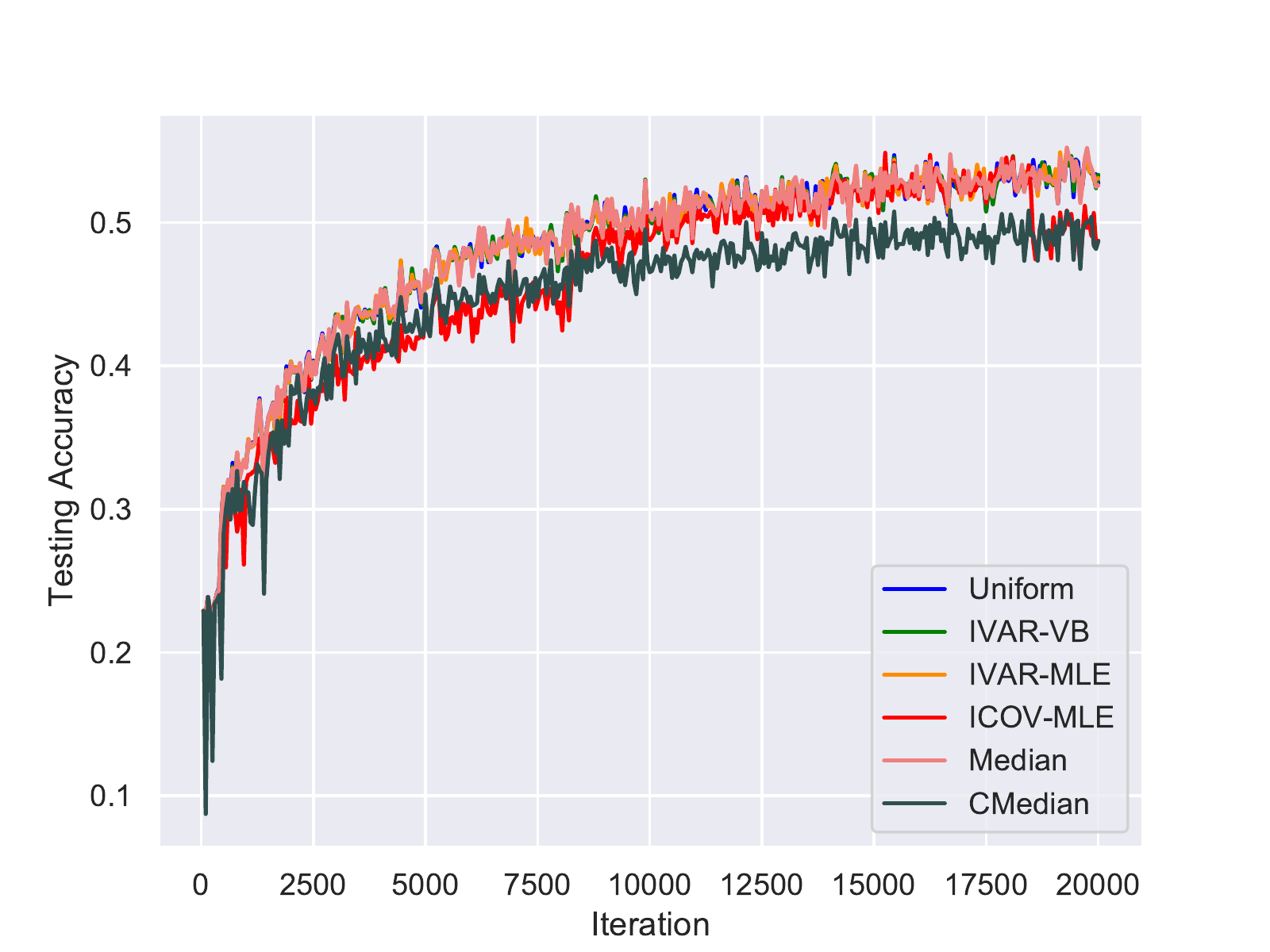} \caption{5 genuine parties, 0 adversaries  }
	\end{subfigure} \hskip 0.2cm
	\begin{subfigure}{0.25\textwidth}
		\includegraphics[width=\textwidth]{./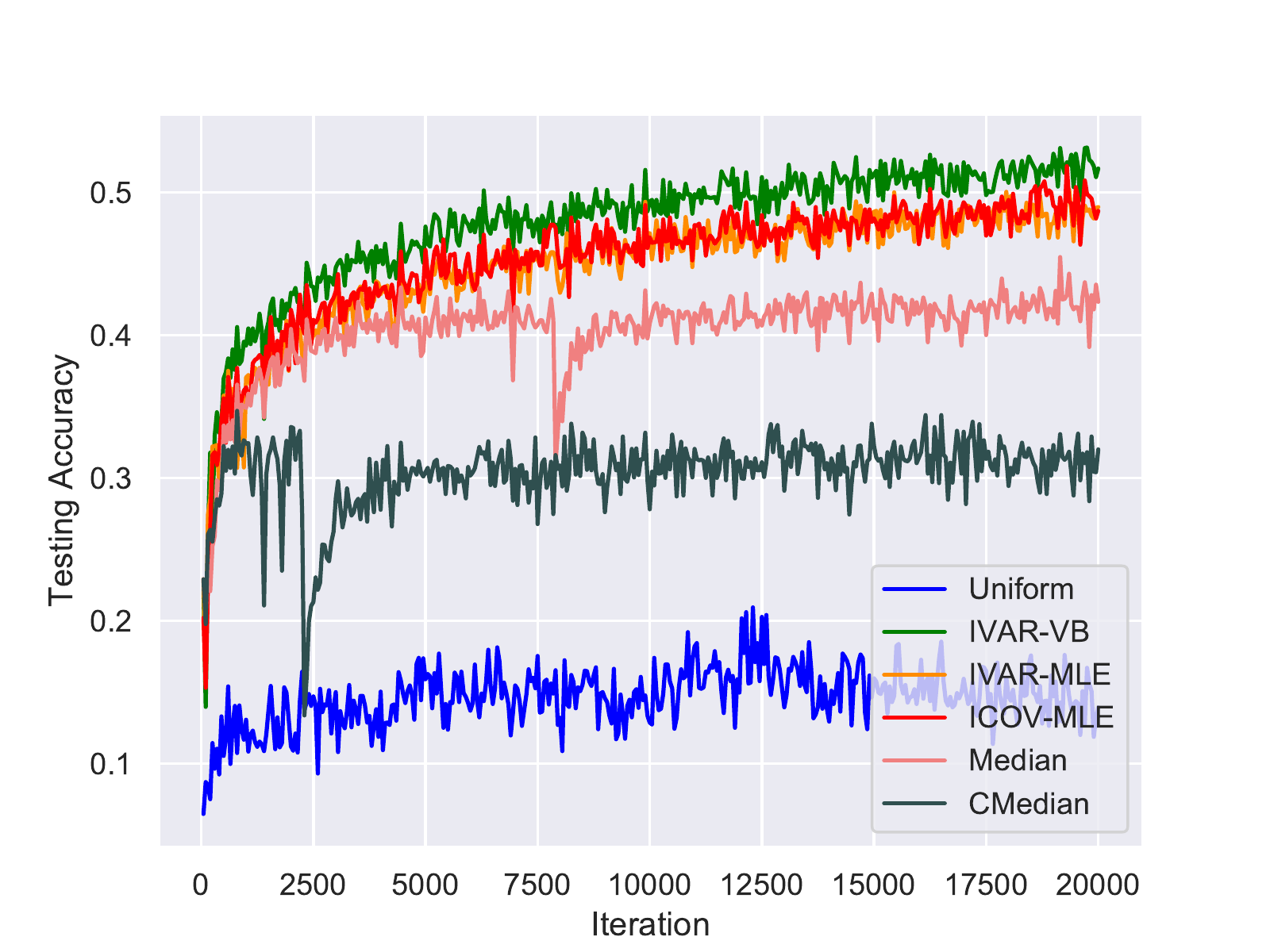} \caption{5 genuine parties, 5 adversaries }
	\end{subfigure} \hskip 0.2cm
	\begin{subfigure}{0.25\textwidth}
		\includegraphics[width=\textwidth]{./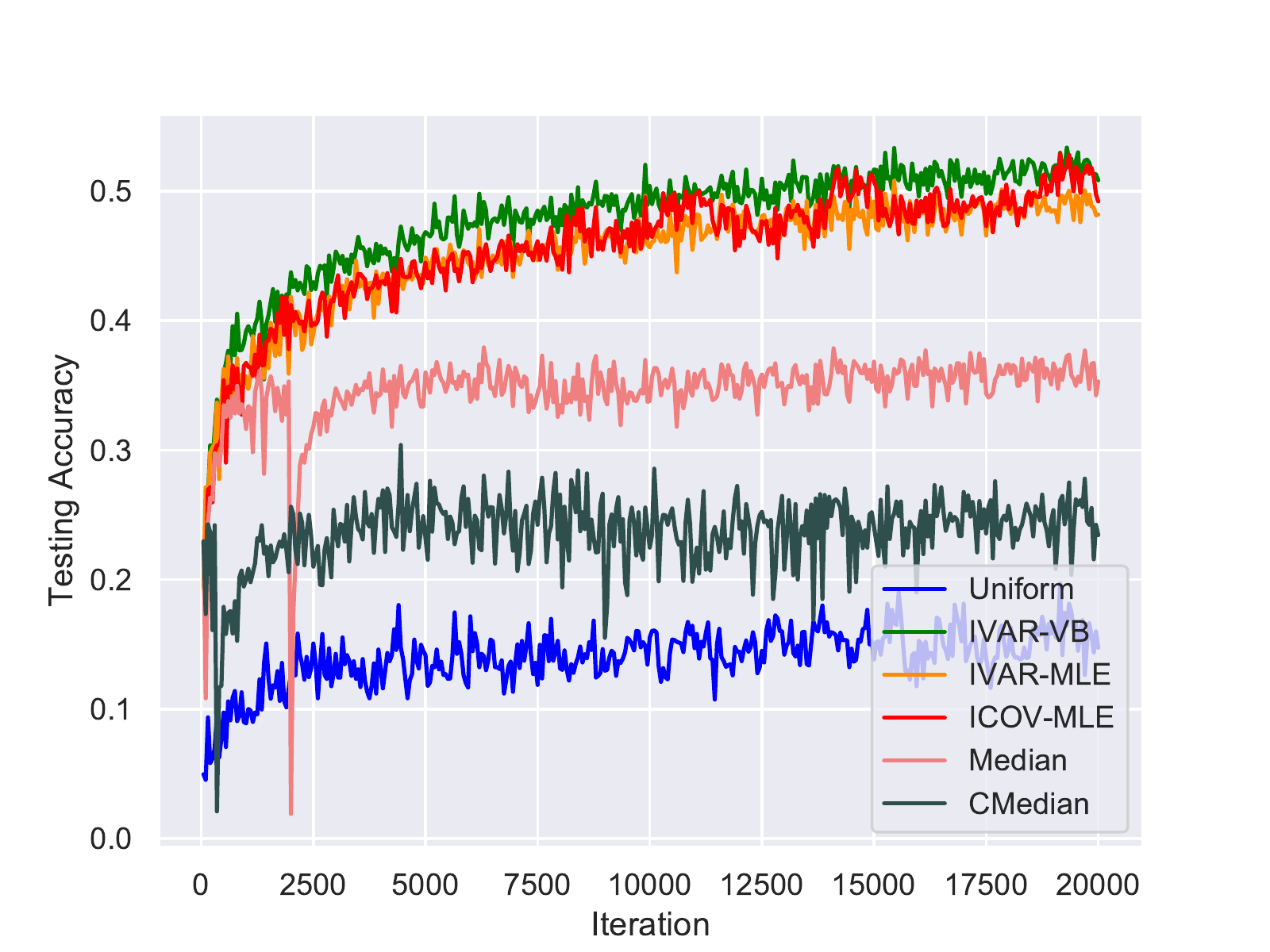} \caption{5 genuine parties, 10 adversar. }
	\end{subfigure} 
	\caption{Adversarial Shakespeare testing performance. ICOV and IVAR outperform  other methods with adversaries.}
	\label{fig:SH1}
\end{figure*}


\section{Related work}

\subsection{Robust Federated Learning}

\citet{untrusted} propose a method for federated classification and regression using a reference dataset with which to weight the parties in the federation, in a manner similar to that of \citet{refdataset2} for single-party, i.e. non-federated, training. They aggregate the parties using either the geometric median or the component-wise version thereof. Some methods such as \citet{zeno} score the contribution of each party and then accept only those up to a threshold. \citet{FL10} propose a stable variant of the geometric median algorithm for model parameter aggregation. The authors argue that parameter aggregation, as opposed to gradient aggregation, allows for more computation to occur on the devices and that assumptions on the distributions of parameters are easier to interpret. In our work we provide a mechanism to estimate the ground truth values for each party in a manner that applies to both  gradients and model parameters.

A number of works such as \citet{byz1, byz2, byz3, FL6, draco}
study the byzantine setting with assumptions on the maximum number of adversarial parties, but do not in general consider the  case of unbalanced data. \citet{byz2} propose a novel aggregation mechanism based on the distance of a party's gradients to other gradients. \citet{byz4} address the byzantine setting with non-iid data by penalizing the difference between local and global parameters, but do not consider unbalanced data. \citet{draco} offer strong guarantees but under rather strong assumptions on the collusion of the parties, running contrary to most privacy requirements, and  requiring significant redundancy with each party computing multiple gradients.
\citet{byz5} are concerned with unbalanced data in a byzantine setting where  parties  erroneously report the sample size, and so propose to truncate  weights reported by the parties to bound the impact of byzantine parties.

\subsection{Collaborative Filtering}
One of the earliest efforts in collaborative filtering was that of  \citet{dawid1979maximum} who proposed a Bayesian inference algorithm to aggregate individual worker labels and infer the ground truth  in categorical labelling. Their approach defined the two main components of a collaborative filtering algorithm: estimating the reliability of each  worker, and inferring the true label of each instance. They
applied expectation maximization and estimated the ground truth in the E-step. Then, using the
estimated ground truth, they compute the maximum likelihood estimates of the confusion matrix
in the M-step.
In continuous value labelling,  \citet{raykar2010learning} modeled each worker prediction as an independent noisy observation of the ground truth. Based on this independent noise assumption, \citet{raykar2010learning} developed a counterpart  to the  Dawid-Skene framework for the continuous domain to infer both the unknown individual variance and the ground truth.   In their M-step, the variance, which  corresponds to the confusion matrix in categorical labelling, is computed to minimize the mean square error with respect to the estimated ground truth. Their  E-step involves re-estimating the ground truth with a weighted sum of the individual predictions, where  the weights are set as  the inverses of individual variances. 
\citet{liu_variational} point to the risk of convergence to a poor-quality local optimum of the above-mentioned EM approaches and propose a variational approach for the problem.
\citet{NIPS2010_4074}   model each  worker as a multi-dimensional quantity including bias and other factors, and  group them as a function of those quantities. In federated learning, a party may also be considered to have a multidimensional set of attributes.
In collaborative filtering, workers seldom participate in all of the tasks. This  sparsity  motivates  the application of matrix factorization techniques. Federated learning also may exhibit this characteristic: if a party does not participate in all training rounds for reasons of latency, or suffers a failure, the  result would be similar to the sparsity found in collaborative filtering.
In continuous  applications    parties may exhibit correlations in their estimates.   \citet{li2019exploiting}, in the context of crowdsourced classification, showed that the incorporation of cross-worker correlations significantly improves accuracy. That work  relies on an extension of the (independent) Bayesian Classifier Combination model of \citet{iBCC} in which  worker correlation is modeled by representing  true classes
by mixtures of subtypes and motivates our inverse covariance scheme.




\section{Problem Setup and Inference Models}

Consider a global loss function
\[
F(\bm{w})=\bE_\bm{z} f(\bm{z};\bm{w})
\]
where $\bm{w}$ is the parameter of interest and $\bE$ denotes the expectation with respect
to $\bm{z}\sim \cP$ for some unknown distribution $\cP$. In a federated learning setting,
each  worker party has access to samples from $\cP$ and wish to jointly minimize $F(\bm{w})$ without revealing the local samples.
Beginning with some initial $\bm{w}=\bm{w}_0$, learning happens over single or multiple rounds where each  worker party submits a local update
to a central aggregator. The local update can be  in the form of model parameter  $ \bm{w} $ or gradient $\nabla_{\bm{w}} F(\bm{w}) $.

Each round of such updates is considered a task; we use $i=1,\ldots,I$ to index such tasks.
 Workers are indexed by $j=1,\ldots,J$. We do not assume full participation in every update round, and
use $J_i\subset\{1\ldots J\}$ to denote the set of participating workers for task $i$. Similarly, let
$I_j\subset\{1\ldots I\}$ denote the set of tasks in which worker $j$ participates. Note that the term  \textit{worker }and  \textit{party} are synonymous, as both are used in the federated learning setting.
In task $i$, each  worker $j\in J_i$ sends an update $\bm{x}_{ij}$ to the aggregator.
We make the following assumption regarding $\bm{x}_{ij}$:
\begin{assumption}\label{assumption:GD}
The local update $\bm{x}_{ij} $ follows a Gaussian distribution $\bm{x}_{ij}\sim\mathcal{N}( \bm{y}_{i},  \Sigma_j )$.
\end{assumption}
We argue that the assumption is well-founded through the following examples.

\begin{example}\label{exam_grad}
Consider a learning scheme where
each update to $\bm{w}$  computes an estimate of the global gradient
$\nabla_\bm{w} F=\bE_\bm{z} \nabla_\bm{w} f(\bm{z};\bm{w})$.
Suppose that each  worker $j$ has access to a sample $\cD_j$ of independent examples from $\cP$ and
computes $\bm{x}_{ij}=\frac{1}{|\cD_j|}\sum_{\bm{z}\in\cD_j}\nabla_\bm{w} f(\bm{z};\bm{w})$.
Let $\bm{y}_i=\bE[\nabla_\bm{w} f(\bm{z};\bm{w})]$ and
$\Sigma=\mCov[\nabla_\bm{w} f(\bm{z};\bm{w})]$. By the central limit theorem, as $|\cD_j|\to\infty$,
$\bm{x}_{ij}$ approches $\cN(\bm{y}_i,\Sigma_j)$ in distribution, with $\Sigma_j=\frac{\Sigma}{|\cD_j|}$.
\end{example}

\begin{example}
Suppose that each local update is obtained by finding the maximum likelihood estimator
for a linear model $\bm{z}_j=H_j\bm{y}_i+\bm{\epsilon}_j$ where $(H_j,\bm{z}_j)$ contains the observed local data.
Assuming that $H_j$ is fixed while $\bm{\epsilon}_j$ follows a Gaussian
distribution $\cN(0,\sigma^2\bI)$, then the least-squares solution, given by
$\bm{x}_{ij}=(H_j^\top H_j)^{-1}H_j^\top\bm{z}_j$ also follows a Gaussian $\cN(\bm{y}_i,\Sigma_j)$ where
$\Sigma_j=\sigma^2(H_j^\top H_j)^{-1}$.
\end{example}

Under Assumption~\ref{assumption:GD},
 further suppose that each local sample is \emph{independent},
the maximum likelihood estimator (MLE) for $\bm{y}_i$ is given by
\begin{align}\label{eqn_ind_mle}
\widehat{\bm{y}}_i &=\arg\max_\bm{y} \sum_j -(\bm{x}_{ij}-\bm{y})^\top\Sigma_j^{-1}(\bm{x}_{ij}-\bm{y}) \notag\\
  &=\big(\sum_j \Sigma_j^{-1} \big)^{-1}\sum_j \Sigma_j^{-1} \bm{x}_{ij}.
\end{align}
In the case of Example~\ref{exam_grad}, where $\Sigma_j=\frac{\Sigma}{|\cD_j|}$, equation~\eqref{eqn_ind_mle}
reduces to
\begin{equation}\label{eqn_fedavg}
  \widehat{\bm{y}}_i=\frac{\sum_j |\cD_j|\bm{x}_{ij}}{\sum_j |\cD_j|}.
\end{equation}
This justifies the standard averaging scheme in federated learning \citep{McMahanMRHA17}.
Note that even under the Gaussian assumption, the standard averaging scheme is the
MLE only when each  worker has independent samples.

In general, if $\bm{x}_{ij}$ and $\bm{x}_{ij'}$ are not independent, the MLE for $\bm{y}_i$ will be more
complicated. Consider the simpler case where each component in $\bm{x}_{ij}$, denoted
$x_{ij}^k$ for $k=1\ldots K$, is independent across $k$, fixing $i,j$.
On the other hand, they may be correlated among the  workers, i.e. across $j$ fixing $i,k$.
Assumption~\ref{assumption:GD} specializes to:
\begin{assumption}\label{assumption:GD1}
The local update $\bm{x}_{ij} $ follows a Gaussian distribution $\bm{x}_{ij}\sim\mathcal{N}( \bm{y}_i,  \sigma_j^2\bI )$.
Furthermore, let $\Phi$ be a $J\times J$ covariance matrix where $\Phi_{j,j}=\sigma_j^2$ and 
$\Phi_{j,j'}=\mCov(x_{ij}^k,x_{ij'}^k)$ for all $k$ and $j\neq j'$. The vector $\bm{x}_{i,:}^k=[x_{i1}^k\ldots x_{iJ}^k]^\top$
follows a Gaussian distribution $\bm{x}_{i,:}^k\sim\cN(y_i^k\one,\Phi)$.
\end{assumption}

The MLE for $\bm{y}_i$ and $\Phi$ under this setting is given by:
\begin{proposition}\label{prop_mle}
Under Assumption~\ref{assumption:GD1}, let $X_{i,\bm{j}_i}$ be the matrix whose columns
are $\bm{x}_{ij}$ for participating workers $j\in J_i$
 and $\Phi_{\bm{j}_i}$ the corresponding submatrix of $\Phi$.
The MLE for $\bm{y}_i$ (fixing $\Phi_{\bm{j}_i}$) and
$\Phi_{\bm{j}_i}$ (fixing $\bm{y}_i$) are
given, respectively, by
\begin{equation}\label{eq:icov:1}
\widehat{\bm{y}}_i=\frac{X_{i,\bm{j}_i}\Phi_{\bm{j}_i}^{-1} \one}{\one^\top\Phi_{\bm{j}_i}^{-1}\one}
\end{equation}
and
\begin{equation}\label{eqn_phi}
\widehat{\Phi}_{\bm{j}_i}=\frac{1}{K} (X_{i,\bm{j}_i}-\bm{y}_i\one^\top)^\top (X_{i,\bm{j}_i}-\bm{y}_i\one^\top).
\end{equation}
\end{proposition}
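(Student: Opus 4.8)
The plan is to write down the log-likelihood implied by Assumption~\ref{assumption:GD1}, use the independence across coordinates to reduce it to a sum of $K$ Gaussian terms that share the covariance $\Phi_{\bm{j}_i}$, and then optimize over $\bm{y}_i$ and $\Phi_{\bm{j}_i}$ separately by elementary calculus. First I would fix task $i$ and restrict attention to the participating workers $J_i$; since a marginal of a multivariate Gaussian over a subset of coordinates is again Gaussian with the corresponding principal submatrix as covariance, Assumption~\ref{assumption:GD1} says precisely that for each coordinate $k=1,\ldots,K$ the vector $a_k:=(x_{ij}^k)_{j\in J_i}\in\bR^{|J_i|}$ is an independent draw from $\cN(y_i^k\one,\Phi_{\bm{j}_i})$, and that $a_k^\top$ is the $k$-th row of $X_{i,\bm{j}_i}$. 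Hence, up to an additive constant,
\[
\ell(\bm{y}_i,\Phi_{\bm{j}_i})=-\frac{K}{2}\log\det\Phi_{\bm{j}_i}-\frac12\sum_{k=1}^{K}(a_k-y_i^k\one)^\top\Phi_{\bm{j}_i}^{-1}(a_k-y_i^k\one).
\]

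For the MLE of $\bm{y}_i$ with $\Phi_{\bm{j}_i}$ fixed, I would note that only the last sum depends on $\bm{y}_i$ and it separates over $k$. Each summand is a scalar quadratic in $y_i^k$ with leading coefficient $\one^\top\Phi_{\bm{j}_i}^{-1}\one>0$ (by positive definiteness), hence strictly convex; setting the derivative to zero gives $\widehat y_i^k=(\one^\top\Phi_{\bm{j}_i}^{-1}a_k)/(\one^\top\Phi_{\bm{j}_i}^{-1}\one)$. Since $\one^\top\Phi_{\bm{j}_i}^{-1}a_k$ is the $k$-th entry of $X_{i,\bm{j}_i}\Phi_{\bm{j}_i}^{-1}\one$, stacking over $k$ yields exactly \eqref{eq:icov:1}, and strict convexity of each term makes this the unique global maximizer.

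For the MLE of $\Phi_{\bm{j}_i}$ with $\bm{y}_i$ fixed, I would collect the residuals into the $K\times|J_i|$ matrix $R:=X_{i,\bm{j}_i}-\bm{y}_i\one^\top$, whose $k$-th row is $(a_k-y_i^k\one)^\top$, so that $\sum_k(a_k-y_i^k\one)(a_k-y_i^k\one)^\top=R^\top R=:S$ and
\[
\ell=-\tfrac{K}{2}\log\det\Phi_{\bm{j}_i}-\tfrac12\,\mathrm{tr}\!\big(\Phi_{\bm{j}_i}^{-1}S\big)+\mathrm{const}.
\]
Substituting $Q=\Phi_{\bm{j}_i}^{-1}$ rewrites this as $\tfrac{K}{2}\log\det Q-\tfrac12\,\mathrm{tr}(QS)$, which is concave in $Q$ on the positive-definite cone; the stationarity condition $\tfrac{K}{2}Q^{-1}=\tfrac12 S$ gives $\widehat\Phi_{\bm{j}_i}=\tfrac1K S=\tfrac1K R^\top R$, which is \eqref{eqn_phi}. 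Alternatively one can simply invoke the standard closed-form MLE of a Gaussian covariance from $K$ samples with known mean.

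The computation is elementary, so there is no single hard obstacle; the points deserving care are (i) the reduction step that replaces $\Phi$ by its principal submatrix $\Phi_{\bm{j}_i}$ when only $J_i$ participates, (ii) the bookkeeping that identifies $X_{i,\bm{j}_i}-\bm{y}_i\one^\top$ as the matrix of coordinate-wise residuals so that $R^\top R$ is the residual scatter matrix, and (iii) confirming the stationary points are genuine global maxima via strict convexity (for $\bm{y}_i$) and concavity of $\log\det$ (for $\Phi_{\bm{j}_i}$) rather than mere critical points. The only spot where one must be slightly cautious is the matrix-calculus identity $\partial\log\det Q/\partial Q=Q^{-1}$ together with $\partial\,\mathrm{tr}(QS)/\partial Q=S$ under the symmetry constraint on $Q$, but this does not affect the stated answer.
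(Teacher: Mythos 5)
Your proposal is correct and follows essentially the same route as the paper: write the per-coordinate Gaussian log-likelihood under Assumption~\ref{assumption:GD1}, then find the stationary points in $\bm{y}_i$ and in $\Phi_{\bm{j}_i}^{-1}$ separately. You simply carry out explicitly the derivative computations the paper leaves as a sketch (and add the convexity/concavity checks confirming the stationary points are global maxima), with all details checking out.
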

\begin{proof}
Let $\bm{x}_{i,\bm{j}_i}^k$ be the (column) vector corresponds to the $k$-th row of $X_{i,\bm{j}_i}$.
Under Assumption~\ref{assumption:GD1}, we have that $\bm{x}_{i,\bm{j}_i}^k\sim\cN(y_i^k\one,\Phi_{\bm{j}_i})$.
The log-likelihood for $\bm{x}_{i,\bm{j}_i}^k$ is given by
\begin{align*}
&\log p(\bm{x}_{i,\bm{j}_i}^k|y_i^k,\Phi_{\bm{j}_i}) \\
=&\frac{1}{2}\log|\Phi_{\bm{j}_i}^{-1}|-\frac{1}{2}
(\bm{x}_{i,\bm{j}_i}^k-y_i^k\one)^\top\Phi_{\bm{j}_i}^{-1}(\bm{x}_{i,\bm{j}_i}^k-y_i^k\one)+c
\end{align*}
for $c$ constant. The MLE can be obtained by computing
$\frac{\partial}{\partial y_i^k}\log p(\bm{x}_{i,\bm{j}_i}^k|y_i^k,\Phi_{\bm{j}_i})$
and
$\frac{\partial}{\partial (\Phi_{\bm{j}_i}^{-1})}\log p(\bm{x}_{i,\bm{j}_i}^k|y_i^k,\Phi_{\bm{j}_i})$
respectively and finding the stationary points.
\end{proof}

\begin{remark}
Note that under Assumption~\ref{assumption:GD1}, $\Phi$ is shared by
all tasks $i=1\ldots I$. Equation~\eqref{eqn_phi} can therefore be extended to  use 
the data across multiple tasks, resulting in the following update for all $j,j'$:
\[
\widehat{\Phi}_{j,j'}=\frac{1}{K|I_j\cap I_{j'}|}\sum_{i\in I_j\cap I_{j'}} (\bm{x}_{ij}-\bm{y}_i)^\top(\bm{x}_{ij'}-\bm{y}_i).
\]
\end{remark}

Let us go back to  Example~\ref{exam_grad} where each local update $\bm{x}_{ij}$ is the
average of independent examples from $\cD_j$ but for any two workers $j\neq j'$, $\cD_j$ and $\cD_{j'}$
can \emph{overlap}.
We have:
\begin{proposition}
Under Assumption~\ref{assumption:GD1},
let $\bm{x}_{ij}=\frac{1}{|\cD_j|}\sum_{\bm{g}\in\cD_j}\bm{g}$ where
$\bm{g}\sim\cN(\bm{y}_i,\sigma^2\bI)$.
Assume that for each $j$,
all $\bm{g}\in\cD_j$ are independent, but $\cD_j\cap\cD_{j'}$ may be non-empty for
any $j\neq j'$.
Then
\begin{equation}\label{eqn_overlap}
\Phi_{j,j'}=\frac{|\cD_j\cap\cD_{j'}|}{|\cD_j||\cD_{j'}|}\sigma^2.
\end{equation}
\end{proposition}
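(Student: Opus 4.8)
The plan is a direct second-moment computation, carried out one coordinate at a time. Fix a task $i$, a coordinate $k\in\{1,\ldots,K\}$, and two workers $j\neq j'$. Writing $g^k$ for the $k$-th component of a sample $\bm{g}$, the modelling hypothesis gives $g^k\sim\cN(y_i^k,\sigma^2)$ for every $\bm{g}$, with the $g^k$ mutually independent \emph{within} each $\cD_j$; the only source of cross-worker dependence is samples that physically appear in both pools, i.e. in $\cD_j\cap\cD_{j'}$, where the \emph{same} random draw is reused in both averages. By linearity,
\[
x_{ij}^k=\frac{1}{|\cD_j|}\sum_{\bm{g}\in\cD_j} g^k,\qquad
x_{ij'}^k=\frac{1}{|\cD_{j'}|}\sum_{\bm{g}'\in\cD_{j'}} g'^k .
\]

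First I would expand the covariance as a double sum,
\[
\mCov(x_{ij}^k,x_{ij'}^k)=\frac{1}{|\cD_j||\cD_{j'}|}\sum_{\bm{g}\in\cD_j}\sum_{\bm{g}'\in\cD_{j'}}\mCov(g^k,g'^k),
\]
and then evaluate each term. If $\bm{g}$ and $\bm{g}'$ are distinct samples they are independent (whether they lie in one pool or two), so $\mCov(g^k,g'^k)=0$; if $\bm{g}=\bm{g}'$ --- which forces $\bm{g}\in\cD_j\cap\cD_{j'}$ --- then $\mCov(g^k,g'^k)=\mVar(g^k)=\sigma^2$. Hence only the $|\cD_j\cap\cD_{j'}|$ diagonal pairs survive, yielding
\[
\mCov(x_{ij}^k,x_{ij'}^k)=\frac{|\cD_j\cap\cD_{j'}|}{|\cD_j||\cD_{j'}|}\,\sigma^2,
\]
which is exactly $\Phi_{j,j'}$ by the definition in Assumption~\ref{assumption:GD1}. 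The same argument with $j'=j$ recovers $\Phi_{j,j}=\sigma^2/|\cD_j|$, consistent with Example~\ref{exam_grad}.

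There is no substantive obstacle here; the only point needing care is bookkeeping the shared samples correctly --- treating each element of $\cD_j\cap\cD_{j'}$ as one and the same random vector in both averages, rather than as two independent copies with the same distribution --- and noting that the resulting expression does not depend on $k$, which is what makes it consistent with the single scalar $\Phi_{j,j'}$ postulated in Assumption~\ref{assumption:GD1}. To make the ``$\bm{g}=\bm{g}'$'' case fully rigorous one can index the union $\cD_j\cup\cD_{j'}$ by a common label set, express each average as a sum over the appropriate index subset, and observe that the cross term is then literally $\sum_{\ell\in\cD_j\cap\cD_{j'}}\mVar(g_\ell^k)=|\cD_j\cap\cD_{j'}|\sigma^2$.
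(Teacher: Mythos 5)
Your proof is correct and is essentially the same argument as the paper's: both reduce to indexing $\cD_j\cup\cD_{j'}$ by a common set of independent draws and observing that only the $|\cD_j\cap\cD_{j'}|$ shared samples contribute, each a variance term $\sigma^2$. The paper packages the bilinearity step as $\mCov(A\bm{x})=A\mCov(\bm{x})A^\top$ for a suitable averaging matrix $A$ acting on the stacked vector of all draws, whereas you expand the double sum directly, but this is purely a notational difference.
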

\begin{proof}
Fix a component $k$ of $\bm{g}$, we have that $g^k\sim\cN(y_i^k,\sigma^2)$.
Let $|\cD_j|=n_1+m$, $|\cD_{j'}|=n_2+m$ and $n=n_1+n_2+m$. Draw $n$ independent examples
$g^k_1\ldots g^k_n$ from $\cN(y_i^k,\sigma^2)$
such that $\cD_j^k=\{g^k_1\ldots g^k_{n_1},g^k_{n_1+n_2+1}\ldots g^k_{n_1+n_2+m}\}$ and
$\cD_{j'}^k=\{g^k_{n_1+1}\ldots g^k_{n_1+n_2},g^k_{n_1+n_2+1}\ldots g^k_{n_1+n_2+m}\}$. Note that
$m$ is the number of overlapping examples.

Let $\bm{x}=[g^k_1\ldots g^k_n]^\top$ and choose $A$ such that
$A\bm{x}=[ x_{ij}^k, x_{ij'}^k ]^\top$.
We use the fact that for a constant matrix $A$ and random vector $\bm{x}$, 
$\mCov(A\bm{x})=A\mCov(\bm{x})A^\top$. Note that $\mCov(\bm{x})=\sigma^2\bI$.
The result then follows by inspecting the entries in $A\mCov(\bm{x})A^\top$.
\end{proof}
With overlapping local samples, one can solve  the MLE of $\bm{y}_i$
using Equation~\eqref{eq:icov:1} with $\Phi$ from Equation~\eqref{eqn_overlap}.
If there is no overlap, then we  again obtain~\eqref{eqn_fedavg}.
In practice, however, it is unlikely that the aggregator has access to
the sample size as well as the sample overlap between any workers.
Our proposed approach is therefore to jointly estimate both $\bm{y}_i$ \emph{and}
the unknown $\Phi$ under Assumption~\ref{assumption:GD1}. We present in what follows two new methods for doing so. In the first we suppose that $\Phi$ is diagonal; this results in an Inverse Variance Weighting method, called IVAR. In the second we estimate the full covariance matrix, $\Phi$, in what we term Inverse Covariance Weighting, or ICOV.



\subsection{Inverse Variance Weighting}

Inverse variance weighting has been used in collaborative filtering for  aggregation without a ground truth. Inverse variance weighting has an appealing interpretation as the maximum-likelihood estimation 
under a bias-variance model,
 based on the assumption that 
 parties have independent additive prediction noise
~\cite{liu2013scoring,raykar2010learning,kara2015modeling}. As such,
the Gaussian model of Assumption~\ref{assumption:GD1} is a good approximation. 

We adapt this idea to  federated learning as follows.
Let the ground truth be $\bm{y}_i$ for each $i$.
Learning the full covariance matrix $\Phi$ can be expensive if the number of parties $J$ is large.
This justifies developing a method that uses
 a diagonal matrix with $ \Phi_{j,j'}=0 $ for $j\neq j'$.
Then,  the maximum likelihood aggregation can be computed as follows: 
\begin{proposition}
Under Assumption~\ref{assumption:GD1}, let $\Phi$ be diagonal.
The MLE for $\bm{y}_i$ (fixing $\Phi$) is given by
\begin{equation}
\label{eq:inverseVariance}
\widehat{\bm{y}}_{i} = \frac{\sum_{j\in J_i} (1/\sigma^2_{j})\bm{x}_{ij}}{\sum_{j\in J_i} 1/\sigma^2_{j}  }.
\end{equation}
For each $j$, the MLE for $\sigma_j^2$ (fixing $\bm{y}_i$) is given by
\begin{equation}\label{eqn_mle_sigma}
\widehat{\sigma}_j^2=\frac{1}{K}\|\bm{x}_{ij}-\bm{y}_i\|^2
\end{equation}
where $\|\cdot\|$ is the Euclidean norm.
\end{proposition}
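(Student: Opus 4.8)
The plan is to write the joint Gaussian log-likelihood implied by Assumption~\ref{assumption:GD1}, specialize it to a diagonal $\Phi$ (so that the components of each $\bm{x}_{i,:}^k$ become independent across workers), and then carry out two one-variable-at-a-time optimizations. Fix a task $i$ and write $\sigma_j^2=\Phi_{j,j}$. With $\Phi$ diagonal the variables $\{x_{ij}^k:j\in J_i,\ k=1\ldots K\}$ are mutually independent, $x_{ij}^k\sim\cN(y_i^k,\sigma_j^2)$, so up to an additive constant the log-likelihood of the task-$i$ data is
\[
\ell \;=\; -\tfrac12\sum_{j\in J_i}\Big(K\log\sigma_j^2+\tfrac{1}{\sigma_j^2}\|\bm{x}_{ij}-\bm{y}_i\|^2\Big),
\]
using $\|\bm{x}_{ij}-\bm{y}_i\|^2=\sum_{k=1}^K(x_{ij}^k-y_i^k)^2$.

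For $\widehat{\bm{y}}_i$ I would hold $\Phi$ fixed: viewed in $\bm{y}_i$, $\ell$ is a sum of strictly concave quadratics that separates across coordinates $k$, and solving $\partial\ell/\partial y_i^k=\sum_{j\in J_i}(x_{ij}^k-y_i^k)/\sigma_j^2=0$ yields $y_i^k=\big(\sum_{j\in J_i}x_{ij}^k/\sigma_j^2\big)\big/\big(\sum_{j\in J_i}1/\sigma_j^2\big)$, which is precisely~\eqref{eq:inverseVariance} read coordinate-wise; strict concavity makes this the global maximizer. (Alternatively one can simply substitute a diagonal $\Phi_{\bm{j}_i}$ into~\eqref{eq:icov:1}.)

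For $\widehat{\sigma}_j^2$ I would hold $\bm{y}_i$ fixed, collect the $\sigma_j^2$-dependent part of $\ell$, and set $s=\sigma_j^2>0$, $r=\|\bm{x}_{ij}-\bm{y}_i\|^2$, so the task is to maximize $g(s)=-\tfrac12(K\log s+r/s)$ on $(0,\infty)$. Since $g'(s)=-\tfrac12(K/s-r/s^2)$ is positive for $s<r/K$ and negative for $s>r/K$, the unique maximizer is $s=r/K$, i.e.\ $\widehat{\sigma}_j^2=\tfrac1K\|\bm{x}_{ij}-\bm{y}_i\|^2$, giving~\eqref{eqn_mle_sigma} (equivalently, the $(j,j)$ entry of the matrix in~\eqref{eqn_phi}); the constraint $s>0$ is automatically respected since $g(s)\to-\infty$ as $s\to0^+$ and as $s\to\infty$.

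I do not expect a genuine obstacle — everything reduces to scalar Gaussian estimation — so the only points worth stating carefully are that the two formulas are the asserted \emph{profile} (one-block-at-a-time) maximizers rather than a joint MLE, and that diagonality of $\Phi$ is exactly what lets us decouple the workers: a Gaussian density with diagonal covariance factorizes over its coordinates, so both $\ell$ and its maximizers split worker-by-worker, and it is the $\log\sigma_j^2$ term — which does not appear when $\bm{y}_i$ is the optimization variable — that makes the variance update nontrivial.
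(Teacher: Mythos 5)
Your proof is correct and follows essentially the same route as the paper: the paper's own proof is a one-line appeal to Proposition~\ref{prop_mle} (specializing its Gaussian log-likelihood stationarity computation to a diagonal $\Phi$), which is exactly the calculation you carry out explicitly, and your parenthetical remarks already note that \eqref{eq:inverseVariance} and \eqref{eqn_mle_sigma} are the diagonal specializations of \eqref{eq:icov:1} and \eqref{eqn_phi}. Your added care about the $\log\sigma_j^2$ term, the sign analysis showing $s=r/K$ is the unique maximizer on $(0,\infty)$, and the observation that these are block-coordinate (profile) maximizers rather than a joint MLE are all sound and slightly more complete than what the paper records.
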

\begin{proof}
The results follow  from Proposition~\ref{prop_mle}.
\end{proof}
The MLE for $\bm{y}_i$ and $\sigma_j^2$ can be jointly optimized by iterating on
Equations~\eqref{eq:inverseVariance} and~\eqref{eqn_mle_sigma}. In particular, beginning with
$\widehat{\bm{y}}_i^{(0)}$, each update is given by:
\[
\widehat{\bm{y}}_i^{(t+1)}=\frac{\sum_{j\in J_i} \big(1/\|\bm{x}_{ij}-\widehat{\bm{y}}_i^{(t)}\|^2 \big)\bm{x}_{ij}}
{\sum_{j\in J_i} \big(1/\|\bm{x}_{ij}-\widehat{\bm{y}}_i^{(t)}\|^2 \big)}.
\]
This bears a resemblance to 
 Weiszfeld's algorithm to estimate the geometric median \cite{pillutla2019robust}, where
 each update is given by:
\[
\widehat{\bm{y}}_i^{(t+1)} = \frac{\sum_{j\in J_i} \big(1/\|\bm{x}_{ij} -   \widehat{\bm{y}}_{i}^{(t)}\|\big)\bm{x}_{ij}}
{\sum_{j\in J_i} \big(1/\|\bm{x}_{ij} -   \widehat{\bm{y}}_{i}^{(t)}\|\big)  }.
\]
\begin{algorithm}[htp]
	\small
	\SetAlgoLined\DontPrintSemicolon
	\KwInput{$ \<\sigma_j, \bm{x}_{ij}\>_{ i\in \tilde{I}, j \in \mathbf{J}_i}$}
	\SetKwProg{myalg}{Algorithm}{}{}
	\For{$ t \rightarrow 1:T $}
	{
		$\bm{y}^{(t)}_{i} \gets \frac{\sum_{j\in\mathbf{j}_i} 1/\sigma^2_j\bm{x}_{ij}}{\sum_{j\in\mathbf{j}_i} 1/\sigma^2_{j}  }, \forall i\in \tilde{I}$\;
		$\sigma^2_j \gets \max\{\epsilon,(1/(K|\tilde{I}_j)|) \sum_{i\in \tilde{I}_j}\|\bm{y}^{(t)}_{i} - \bm{x}_{ij}\|^2_2\} $, $ \forall j $\;
	}
	\KwOutput{$\bm{y}^{(T)}_{i}, \<\sigma_j\>_{ j \in \mathbf{j}_i}$}
	\caption{\small Inverse-Variance Weighting Aggregator}
	\label{algo:inverse_distance}
\end{algorithm}
The algorithm for inverse variance weight aggregation, IVAR,  provided in Algorithm~\ref{algo:inverse_distance}, works as follows: upon receiving the local update  for tasks $ \tilde{I} $, the aggregator iteratively computes the ``\textit{consensus}'' $ \bm{y}^{(t)}_{i} $ for each task $ i $ using the  variance $ \sigma_j $ of each worker $ j $. Note that, as $ \sigma_j $  is assumed invariant over  tasks,  it can be  computed  as the average  variance. 

\subsection{Inverse Covariance Weighting}
\label{sec:icov}

The independence assumption in the bias-variance model 
can be violated in  federated learning scenarios when parties  use similar information and methods.
This gives rise to a collective bias within groups  of  parties.
Ideally one would like then to estimate the full covariance matrix
$\Phi$, such as using iterative updates from Proposition~\ref{prop_mle}.
The number of parameters grows with $J^2$ however and  may give poor estimations
if groups do not jointly participate in many of the tasks.
This motivates the use of a latent feature model that
allows for noise correlation across  parties
while  addressing the challenge of  sparse observations.
In particular, consider the following probabilistic model for each
local update. Without loss of generality,  let $K=1$ and omit  index $k$:
\begin{equation}\label{eqn_latent_bias}
  x_{ij}\sim
  \mathcal{N}\left(y_i+\bm{u}_i^\top \bm{v}_j,\sigma^2\right),
\end{equation}
where $\bm{u}_i\in R^D$ and $\bm{v}_j\in R^D$ are latent feature vectors 
associated with  task $i$ and worker $j$, respectively.
As such, all  observations are correlated by the unknown latent
feature vectors.
Let $X$ be the local updates over multiple tasks with
entries $X_{ij}=x_{ij}$.
Consider  maximizing the log-likelihood:
\begin{align*}
  &\log p\left(\mathbf{X}\left|\mathbf{y},\mathbf{U},\mathbf{V},\sigma^2\right.\right) \\
  =
&\sum_{(i,j): i\in I_j} \log p\left(x_{ij}\left|y_{i}, \mathbf{u}_i, \mathbf{v}_j,\sigma^2\right.\right),
\end{align*}
where  matrices 
$\mathbf{U}\coloneqq \left[\mathbf{u}_1,\dots,\mathbf{u}_I\right]^\top \in R^{I\times D}$ and 
$\mathbf{V}\coloneqq \left[\mathbf{v}_1,\dots,\mathbf{v}_J\right]^\top \in R^{J\times D}$.
In particular, we extend  inverse covariance weighting  
by a nonlinear matrix factorization technique 
based on Gaussian processes~\cite{lawrence2009non}
to jointly infer the ground truth  and the latent feature vectors.
From~\eqref{eqn_latent_bias},
observe that, by placing independent zero mean Gaussian priors 
$\mathcal{N}(\mathbf{0},\sigma_u^2\mathbf{I})$ on $\mathbf{u}_i$,
we recover the probabilistic model of Assumption~\ref{assumption:GD1}
where $\bm{x}_{i,:}\sim\cN(y_i\one,\Phi)$
with the covariance matrix:
\begin{equation*}
\Phi= \sigma_u^2 \mathbf{V}\mathbf{V}^\top + \sigma^2\bI.
\end{equation*}
Thus, the problem of covariance estimation 
has been transformed into the problem of estimating $\mathbf{V}$, $\sigma_u^2$, $\sigma^2$.
The degrees of freedom are now  
determined by the size of $\mathbf{V}$ which contains $J\times D$ values.
Since we expect $D \ll J$ in practical applications, this problem 
has significantly fewer degrees of freedom than the original problem 
of estimating the $J^2$ values of the entire covariance matrix.

Maximizing the log-likelihood involves alternating between the
optimization of $\mathbf{y}$ and $(\mathbf{V}, \sigma^2, \sigma_u^2)$. 
Specifically,  update $\mathbf{y}$ using equation~\eqref{eq:icov:1} 
and perform stochastic gradient descent on the model parameters as there 
is no closed-form solution for the latter.
The log-likelihood for round $i$ is:
\begin{align*}
  E_i(\mathbf{V},\sigma^2,\sigma_u^2)
  =
  - \log\left|\Phi_{\mathbf{j}_i}\right| 
  - \boldsymbol{\delta}_{i,\mathbf{j}_i}^\top \Phi_{\mathbf{j}_i}^{-1} \boldsymbol{\delta}_{i,\mathbf{j}_i} + \text{const.}
\end{align*}
and the gradients with respect to the parameters are:
\begin{subequations}
\begin{align}
  \nabla_{\mathbf{V}_{\mathbf{j}_i,:}} E_i(\mathbf{V},\sigma^2,\sigma_u^2)
  &=
  2\sigma_u^2
  \mathbf{G}_i \mathbf{V}_{\mathbf{j}_i,:},
  \label{eq:sgd:feature}
  \\
  \nabla_{\sigma^2} E_i(\mathbf{V},\sigma^2,\sigma_u^2)
  &=
  \mathrm{Tr}
  \left( 
    \mathbf{G}_i
  \right),
  \label{eq:sgd:param1}
  \\
  \nabla_{\sigma_u^2} E_i(\mathbf{V},\sigma^2,\sigma_u^2)
  &=
  \mathrm{Tr}
  \left(
    \mathbf{G}_i
    \mathbf{V}_{\mathbf{j}_i,:}
    \mathbf{V}_{\mathbf{j}_i,:}^\top
  \right).
  \label{eq:sgd:param2}
\end{align}
\label{eq:sgd}%
\end{subequations}
where $\bs{\delta}_{i,\bm{j}_i}=(\bm{x}_{i,\bm{j}_i}-y_i\one)$,
$\mathbf{G}_i \coloneqq 
\Phi_{\mathbf{j}_i}^{-1} 
\boldsymbol{\delta}_{i,\mathbf{j}_i} 
\boldsymbol{\delta}_{i,\mathbf{j}_i}^\top 
\Phi_{\mathbf{j}_i}^{-1} - 
\Phi_{\mathbf{j}_i}^{-1}$ and
$\mathbf{V}_{\mathbf{j}_i,:} \in R^{|J_i| \times D}$ 
is the submatrix of $\mathbf{V}$ containing 
the rows corresponding to the indices in $J_i$.
After inferring the covariance matrix, 
computing the ground truth  for new instances can be done
with Eq.~\eqref{eq:icov:1}.
One can also model the covariance matrix with non-linear kernel functions 
by replacing the inner products 
$\mathbf{v}_j^\top \mathbf{v}_{j'}$ in the covariance expression 
by a Mercer kernel function $k(\mathbf{v}_j, \mathbf{v}_{j'})$. 
The parameters in the kernel representation can be optimized by 
gradient descent on the log-likelihood function.
We focus, however, on the linear kernel 
$k(\mathbf{v}_j, \mathbf{v}_{j'}) = \mathbf{v}_j^\top \mathbf{v}_{j'} $.



\subsection{Variational Bayesian (VB) Inference}

The maximum-likelihood estimator can lead to overfitting 
when the available data is scarce, and   gradient updates
~\eqref{eq:sgd:feature}-\eqref{eq:sgd:param2} for inverse covariance
weighting are computationally
expensive.
For improved robustness and computational efficiency,
we propose
a Variational Bayesian approach
to  approximate the posterior distributions of the ground truth
under both  independent  and latent noise models. 

\paragraph{Independent Noise Model}
Under Assumption~\ref{assumption:GD1}, we place
 a prior over the ground truth $y_i$ for each $i$. Again,  assume
 $K=1$ without loss of generality. Consider the simplest prior:
 a zero-mean Gaussian $y_i\sim\cN(0,\tau^2)$
where $\tau^2$ is a hyperparameter, though this can be extended to non-zero-mean priors.
From the observed data $\bm{X}$,  estimate the full
posterior $p(\bm{y}|\bm{X})$
instead of a point estimate $\widehat{\bm{y}}$.
The variational approximate inference procedure 
approximates the posterior $p(\mathbf{y}|\mathbf{X})$
by finding the distribution $q_y$
that maximizes the (negative of the) variational free energy:
\begin{equation*}
  F\left(q_y\right)
  =
  \mathbb{E}_{q_y}
  \left[ \log
    \frac{p\left(\mathbf{X},\mathbf{y}\right)}
    {q_y(\mathbf{y})}
  \right],
\end{equation*}
where the joint probability is given by:
\begin{equation*}
  p\left(\mathbf{X},\mathbf{y}\right)
  =
  \prod_{(i,j):i\in I_j}
  p\left(x_{ij}\left| y_i\right.\right)\prod_i p\left(y_i\right).
\end{equation*}
Setting the derivative of $F$ w.r.t $q_y$ to zero 
implies that the stationary distributions are independent Gaussians:
\begin{equation*}
  q_y\left(\mathbf{y}\right)
  =
  \prod_i
  \mathcal{N}\left(y_i\left|\bar{y}_i,\lambda_i\right.\right).
\end{equation*}
where  means and covariances satisfy the following:
\begin{align}
  \lambda_i
  &=
  \left(
    \frac{1}{\tau^2}
    + \sum_{j\in J_i}\frac{1}{\sigma_j^2}
  \right)^{-1},
  \label{eq:ind-vb-truth-var}
  \\
  \bar{y}_{i}
  &=
  \lambda_i 
  \sum_{j\in J_i} \frac{x_{ij}}{\sigma_j^2} .
  \label{eq:ind-vb-truth-mean}
\end{align}
In this case, 
Eq.~\eqref{eq:ind-vb-truth-var} and~\eqref{eq:ind-vb-truth-mean}
 provide the exact posterior for the ground truth $\bm{y}$
given $\bm{X}$. Updating the hyperparameters by
minimizing the variational free energy results in:
\begin{align}
  \tau^2
  &= \frac{1}{I}\sum_i \lambda_i + \bar{y}_{i}^2,
  \label{eq:ind-vb-truth-param}
  \\
  \sigma_j^2
  &=
  \frac{1}{|I_j|}
  \sum_{i\in I_j} 
  \left(
    \lambda_i + \left(x_{ij} - \bar{y}_{i}\right)^2 
  \right).
  \label{eq:ind-vb-error-param}
\end{align}
In summary, the proposed approach performs block coordinate descent
by applying repeatedly eq.~\eqref{eq:ind-vb-truth-var} to~\eqref{eq:ind-vb-error-param} 
and aggregates using the posterior mean $\bar{y}_i$.

\paragraph{Latent Noise Model}

One of the key steps in
the MLE approach to Inverse Covariance Weighting
is the marginalization of $\mathbf{U}$ 
conditioned on $(\mathbf{V},\sigma^2,\sigma_u^2)$. 
This  can be interpreted as 
 Bayesian averaging over $\mathbf{U}$. 
However,  
full Bayesian averaging over both $\mathbf{U}$ and $\mathbf{V}$ is challenging,
motivating the
Variational Bayes approach.
First,  place zero mean Gaussian priors
on the latent variables:
\begin{align*}
 p\left(y_i,\sigma_y^2\right) 
 = \mathcal{N}\left(y_i\left|0,\sigma_y^2\right.\right),
 \\
 p\left(\mathbf{u}_i,\sigma_u^2\right) 
  = \mathcal{N}\left(\mathbf{u}_i\left|\mathbf{0},\sigma_u^2\mathbf{I}\right.\right),
  \\
  p\left(\mathbf{v}_j,\sigma_v^2\right) 
  = \mathcal{N}\left(\mathbf{v}_j\left|\mathbf{0},\sigma_v^2\mathbf{I}\right.\right),
\end{align*}
where $\sigma_y^2$, $\sigma_u^2$, $\sigma_v^2$ are hyperparameters.
For notational brevity, we  omit the dependence of the distributions 
on the hyperparameters $\sigma^2$, $\sigma_y^2$, $\sigma_u^2$, $\sigma_v^2$.
The variational  inference procedure finds distributions 
that maximize the (negative of the) variational free energy of the model from~\eqref{eqn_latent_bias}, assuming
a factored distribution $q(\bm{y},\bm{U},\bm{V})=q_y(\bm{y})q_u(\bm{U})q_v(\bm{V})$:
\begin{equation*}
  F\left(q_y,q_u,q_v\right)
  =
  \mathbb{E}_{q_y,q_u,q_v}
  \left[\log
    \frac{ p\left(\mathbf{X},\mathbf{y},\mathbf{U},\mathbf{V}\right)}
    {q_y(\bm{y})q_u(\mathbf{U})q_v(\mathbf{V})}
  \right],
\end{equation*}
where the joint probability is:
\begin{align*}
  p\left(\mathbf{X},\mathbf{y},\mathbf{U},\mathbf{V}\right)
  &=
  \prod_{(i,j):i\in I_j}
  p\left(x_{ij}\left| y_i,\mathbf{u}_i,\mathbf{v}_j\right.\right)
  \\
  &\qquad\;
  \times
  \prod_i p\left(y_i\right)
  \prod_i p\left(\mathbf{u}_i\right)
  \prod_j p\left(\mathbf{v}_j\right).
\end{align*}
Then, solve for $q_y$, $q_u$ and $q_v$ by performing 
block coordinate descent on $F$. The resulting posterior distributions
are  Gaussians where $q_y(\bm{y})=\prod_i \cN(y_i|\bar{y}_i,\lambda_i)$,
$q_u(\bm{U})=\prod_i\cN(\bm{u}_i|\bar{\bm{u}}_i,\Phi_i)$, and
$q_v(\bm{V})=\prod_j\cN(\bm{v}_j|\bar{\bm{v}}_j,\Psi_j)$.
The means and covariances are given by:
\begin{align}
  \lambda_i
  &=
  \left(
    \frac{1}{\sigma_y^2}
    + \sum_{j\in J_i}\frac{1}{\sigma^2}
  \right)^{-1},
  \label{eq:latent-vb-truth-var}
  \\
  \bar{y}_{i}
  &=
  \lambda_i 
  \sum_{j\in J_i} \frac{1}{\sigma^2} \left(x_{ij} - \bar{\mathbf{u}}_i^\top \bar{\mathbf{v}}_j\right),
  \label{eq:latent-vb-truth-mean}
  \\
  \boldsymbol\Phi_i
  &=
  \left(
    \frac{1}{\sigma_u^2}\mathbf{I}
    + \sum_{j\in J_i}\frac{1}{\sigma^2}
    \left(\boldsymbol\Psi_j + \bar{\mathbf{v}}_j\bar{\mathbf{v}}_j^\top\right)
  \right)^{-1},
  \label{eq:latent-vb-party-var}
  \\
  \bar{\mathbf{u}}_i 
  &= 
  \boldsymbol\Phi_i 
  \sum_{j\in J_i} \frac{1}{\sigma^2}\left(x_{ij} - \bar{y}_{i}\right) \bar{\mathbf{v}}_j,
  \label{eq:latent-vb-party-mean}
  \\
  \boldsymbol\Psi_j
  &=\left(
    \frac{1}{\sigma_v^2} \mathbf{I} 
  + \sum_{i \in I_j} \frac{1}{\sigma^2}
  \left(\boldsymbol\Phi_i + \bar{\mathbf{u}}_i\bar{\mathbf{u}}_i^\top\right)
  \right)^{-1},
  \label{eq:latent-vb-task-var}
  \\
  \bar{\mathbf{v}}_j
  &= 
  \boldsymbol{\Psi}_j
  \sum_{i\in I_j}\frac{1}{\sigma^2}\left(x_{ij} - \bar{y}_{i}\right)\bar{\mathbf{u}}_i.
  \label{eq:latent-vb-task-mean}
\end{align}
The hyperparameter updates are given by:
\begin{align}
  \sigma_y^2
  &=
  \frac{1}{I}
  \left(\sum_i \left(\lambda_i + \bar{y}_i^2\right) \right),
  \label{eq:latent-vb-gt-param}
  \\
  \sigma_u^2
  &=
  \frac{1}{DI}
  \left(\sum_i \mathrm{Tr}\left(\boldsymbol\Phi_i + \bar{\mathbf{u}}_i \bar{\mathbf{u}}_i^\top \right)\right),
  \label{eq:latent-vb-party-param}
  \\
  \sigma_v^2
  &=
  \frac{1}{DJ}
  \left(\sum_j \mathrm{Tr}\left(\boldsymbol\Psi_j + \bar{\mathbf{v}}_j \bar{\mathbf{v}}_j^\top \right)\right),
  \label{eq:latent-vb-task-param}
  \\
  \sigma^2
  &=
  \frac{1}{\sum_{j} |I_j|}
  \sum_{(i,j):i\in I_j} \left[
    \lambda_i + \left(x_{ij} - \bar{y}_{i}\right)^2 
    - 2 \left(x_{ij} - \bar{y}_{i}\right) \bar{\mathbf{u}}_i^\top \bar{\mathbf{v}}_j \right.
  \nonumber
  \\
  & \qquad\qquad\quad\; 
    + \mathrm{Tr}\left(
    \left(\boldsymbol\Psi_i + \bar{\mathbf{u}}_i\bar{\mathbf{u}}_i^\top\right)
    \left(\boldsymbol\Phi_j + \bar{\mathbf{v}}_j\bar{\mathbf{v}}_j^\top\right)
  \right)
\Big].
  \label{eq:latent-vb-error-param2}
\end{align}
In summary, the algorithm applies 
equations~\eqref{eq:latent-vb-truth-var} to~\eqref{eq:latent-vb-error-param2} 
repeatedly until convergence. 
\begin{algorithm}[htb]
	\small
	\SetAlgoLined\DontPrintSemicolon
	\KwInput{$ \<\mathbf{v}_j, \sigma_j, \bm{x}_{ij}\>_{ j \in \mathbf{j}_i}$}
	\SetKwProg{myalg}{Algorithm}{}{}
	\For{$ t \rightarrow 1:T $}
	{
		$ \boldsymbol\Sigma_{\mathbf{j}_i}= \sigma_u^2 \mathbf{V}_{\mathbf{j}_i}\mathbf{V}_{\mathbf{j}_i}^\top + \textbf{diag}(\sigma^2_{\mathbf{j}_i}) $\;
		$ y^{(t)}_i= \frac{\mathbf{1}^\top {\boldsymbol\Sigma}_{\mathbf{j}_i}^{-1} x_{i, \mathbf{j}_i}}
		{\mathbf{1}^\top {\boldsymbol\Sigma}_{\mathbf{j}_i}^{-1}\mathbf{1}} $\;
		Update using ~\eqref{eq:latent-vb-truth-var}-\eqref{eq:latent-vb-error-param2}.
	}
	\KwOutput{$\bm{y}^{(T)}_{i},  \<\mathbf{v}_j, \sigma_j, \bm{x}_{ij}\>_{ j \in \mathbf{j}_i}$}
	\caption{\small Inverse Covariance Weighting Aggregator }
	\label{algo:rl}
\end{algorithm}


%
%
%

\begin{table}[ht]
		\begin{tabular}{l|l|l|l|}
		\cline{2-4}
		& Synthetic & MNIST  & Shakespeare \\ \hline
		\multicolumn{1}{|l|}{Uniform avg.} & 10.17    & 0.4926 & 0.16      \\ \hline
		\multicolumn{1}{|l|}{Geom. media} & 8.13   & 0.5233 & 0.41      \\ \hline
		\multicolumn{1}{|l|}{Coord. median} & 6.131     & 0.7987 & 0.29      \\ \hline
		\multicolumn{1}{|l|}{\ivar-VB} &4.62     & 0.8943 & \textbf{0.56 }     \\ \hline
		\multicolumn{1}{|l|}{\ivar-MLE} & 4.66     & \textbf{0.9043} & 0.50     \\ \hline
		\multicolumn{1}{|l|}{\icov-VB}& \textbf{2.89}    & 0.8932 & 0.52      \\ \hline
		\multicolumn{1}{|l|}{\icov-MLE } & 8.75     & 0.5253 & N.A      \\ \hline
	\end{tabular}
		\caption{Performance of the federated learning aggregation algorithms, uniform averaging, geometric median, and coordinate-wise median, against proposed IVAR and ICOV,  MLE and  VB versions. In the Synthetic linear regression example, with full participation of 5 genuine parties and full batch, prediction error is shown, hence lower  is better.  On the one-round  MNIST task  and 5 genuine parties and 5 adversaries, prediction accuracy is shown, so higher is better. In the multi-round stochastic gradient aggregation task using the Shakespeare dataset, with 5 genuine parties and 5 adversaries accuracy is provided so again higher is better.}
		\label{tab:summary}
\end{table}


\section{Experiments}

We present experimental results with a synthetic dataset and two real datasets: MNIST and  Shakespeare \cite{McMahanMRHA17}. 
We compare 	 (1)  Uniform averaging
	(2)  Geometric median which uses the smoothed Weiszfeld algorithm of~\citet{pillutla2019robust}
		(3) Coordinate-wise median which uses the coordinate-wise median as in \citet{YinCRB18}
	(4) our proposed IVAR, using the MLE formulation and using the VB
	(5) our proposed ICOV,  again using the MLE formulation and using  VB, which computes a low-rank estimation of the covariance matrix.

\paragraph{Synthetic dataset experiment}

We design a synthetic linear regression experiment  to create an environment where each party in the federation has a different noise level, and   the local data of each party is overlapping. The experimental setup is provided in the Supplementary Materials.
Figure~\ref{fig:distributed_SGD_linear} shows the algorithm performance  for various levels of participation and    batch size. 
ICOV  performs better than IVAR, and both ICOV and IVAR outperform the other baselines.

\paragraph{MNIST}

In this adversarial MNIST  classification task,   a Gaussian adversary  submits a random vector with components generated from a standard normal, $ \mathcal{N}(0, 1) $. We first study  one-round parameter estimation using using logistic regression, as in \citet{YinCRB18} with 5 genuine parties and  $R\in[0,10]$ adversaries.  
Bayesian inference aggregation IVAR and ICOV outperform the other algorithms including    robust estimators coordinate-wise median and geometric median when the number of adversaries increases. Results show the training convergence of  IVAR, ICOV and the   geometric median. 
  IVAR and geometric median convergence are fast with less than 5 iterations. ICOV  convergence is slower, but with a large   number of adversaries, ICOV converges to a better solution than IVAR. The geometric median  is less robust than the component-wise median in  one-round estimation. Details and results for this setting can be found in the Supplementary Materials.

Next, we solve adversarial MNIST using  distributed stochastic gradient descent (SGD)  with the architecture of~\citet{baruch2019little}.
 Figure~\ref{fig:distributed_SGD} shows that when there is no adversary, uniform  aggregation  is ideal. However, with adversaries, both uniform averaging and coordinate-wise median perform poorly. When adversaries account for more than half of the parties,  the Bayesian methods IVAR and ICOV are superior.

\paragraph{Shakespeare}
Lastly, we consider an NLP task using the Shakespeare dataset.  Results, shown in Figure \ref{fig:SH1}, illustrate the  case where an adversary  submits a random  vector generated from a normal distribution in place of its true parameter vector.  The different setting where the adversary performs a random local update can be found in the Supplementary Materials. Across the board IVAR-VB is shown to be superior to the other methods.

The results are summarized in Table \ref{tab:summary}, and further details are provided in the Supplementary Materials. Note that the synthetic dataset is measured in terms of error, so that a lower number is better, while the MNIST and Shakespeare tasks report classification accuracy,  so higher is better. Across the board, the proposed methods are far superior to both standard averaging and robust aggregation algorithms. It can be noted that the choice of which variant of the proposed methods is superior depends upon the task. Overall, the MLE version of ICOV tends to be computationally challenging, but the VB version of ICOV is very competitive. The IVAR method using both MLE and VB is an ideal choice when overlap is not extensive, as is the case in the MNIST and Shakespeare tasks.

\section{Discussion} We  proposed new methods for federated learning aggregation on heterogeneous data. Given that data heterogeneity in federated learning is similar to estimating the ground truth in collaborative filtering,  we  adapt techniques to estimate the uncertainty of the party updates so as to appropriately weight their contribution to the federation. The techniques involve both MLE and Variational Bayes estimators and in the simplest setting reduce to the standard average aggregation step. In more general cases, including data overlap, they provide new techniques, which enjoy superiority in  the synthetic and real world datasets examined. We expect that these methods  will help  make federated learning applicable to a wider variety of real world problems.

\bibliography{aaai21}

\end{document}